\newtheorem{definition}{Definition}
\newtheorem{lemma}{Lemma}
\newtheorem{corollary}{Corollary}
\newenvironment{proof}{\noindent{\sf Proof.}}{\hfill $\boxtimes\hspace{2mm}$\linebreak}
\renewcommand{\phi}{\varphi}
\renewcommand{\epsilon}{\varepsilon}
\newcommand{\K}{{\sf K}}
\renewcommand{\S}{{\sf S}}
\newcommand{\E}{{\sf H}}
\title{Together We Know How to Achieve:\\ An Epistemic Logic of Know-How\\
\vspace{3mm}
\Large (Extended Abstract)
}
\author{Pavel Naumov
\institute{Vassar College\\ Poughkeepsie, New York, USA}
\email{pnaumov@vassar.edu}
\and
 Jia Tao
\institute{Lafayette College\\
Easton, Pennsylvania, USA}
\email{\quad taoj@lafayette.edu}
}
\begin{document}

\maketitle

\begin{abstract}
The existence of a coalition strategy to achieve a goal does not necessarily mean that the coalition has enough information to know how to follow the strategy. Neither does it mean that the coalition knows that such a strategy exists. The paper studies an interplay between the distributed knowledge, coalition strategies, and coalition ``know-how" strategies. The main technical result is a sound and complete trimodal logical system that describes the properties of this interplay.
\end{abstract}

\maketitle

\section{Introduction}


An agent $a$ comes to a fork in a road. There is a sign that says that one of the two roads leads to prosperity, another to death. The agent must take the fork, but she does not know which road leads where. Does the agent have a strategy to get to prosperity? On one hand, since one of the roads leads to prosperity, such a strategy clearly exists. We denote this fact by modal formula $\S_a p$, where statement $p$ is a claim of future prosperity. Furthermore, agent $a$ knows that such a strategy exists. We write this as $\K_a\S_a p$. Yet, the agent does not know what the strategy is and, thus, does not know how to use the strategy. We denote this by $\neg\E_a p$, where {\em know-how} modality $\E_a$ expresses the fact that agent $a$ knows how to achieve the goal based on the information available to her.  In this paper we study the interplay between modality $\K$, representing {\em knowledge}, modality $\S$, representing the existence of a {\em strategy}, and modality $\E$, representing the existence of a {\em know-how strategy}. Our main result is a complete trimodal axiomatic system capturing properties of this interplay.

\subsection{Epistemic Transition Systems}

In this paper we use epistemic transition systems to capture knowledge and strategic behavior. Informally, epistemic transition system is a directed labeled graph supplemented by an indistinguishability relation on vertices. For instance, our motivational example above can be captured by epistemic transition system $T_1$ depicted in Figure~\ref{intro-1 figure}. 
\begin{figure}[ht]
\begin{center}
\vspace{-2mm}
\scalebox{.6}{\includegraphics{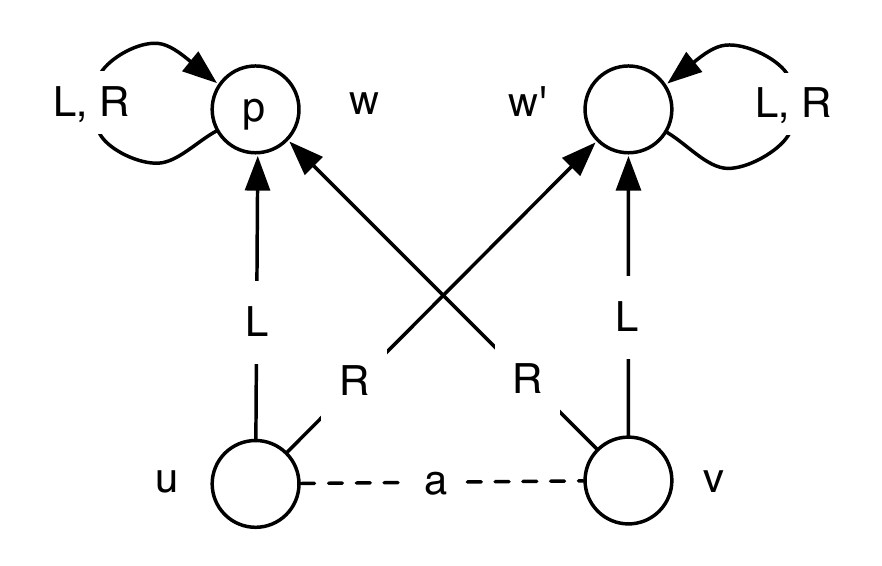}}
\vspace{0mm}
\caption{Epistemic transition system $T_1$.}\label{intro-1 figure}
\vspace{-2mm}
\end{center}
\vspace{-2mm}
\end{figure}
In this system state $w$ represents the prosperity and state $w'$ represents death. The original state is $u$, but it is indistinguishable by the agent $a$ from state $v$. Arrows on the diagram represent possible transitions between the states. Labels on the arrows represent the choices that the agents make during the transition. For example, if in state $u$ agent chooses left (L) road, she will transition to the prosperity state $w$ and if she chooses right (R) road, she will transition to the death state $w'$. In another epistemic state $v$, these roads lead the other way around. States $u$ and $v$ are not distinguishable by agent $a$, which is shown by the dashed line between these two states. In state $u$ as well as state $v$ the agent has a strategy to transition to the state of prosperity: $u\Vdash\S_a p$ and $v\Vdash\S_a p$. In the case of state $u$ this strategy is L, in the case of state $v$ the strategy is R. Since the agent cannot distinguish states $u$ and $v$, in both of these states she does not have a know-how strategy to reach prosperity: $u\nVdash\E_a p$ and $v\nVdash\E_a p$. At the same time, since formula $\S_a p$ is satisfied in all states  indistinguishable to agent $a$ from state $u$, we can claim that $u\Vdash\K_a\S_a p$ and, similarly, $v\Vdash\K_a\S_a p$. 

\begin{figure}[ht]
\begin{center}
\vspace{-2mm}
\scalebox{.6}{\includegraphics{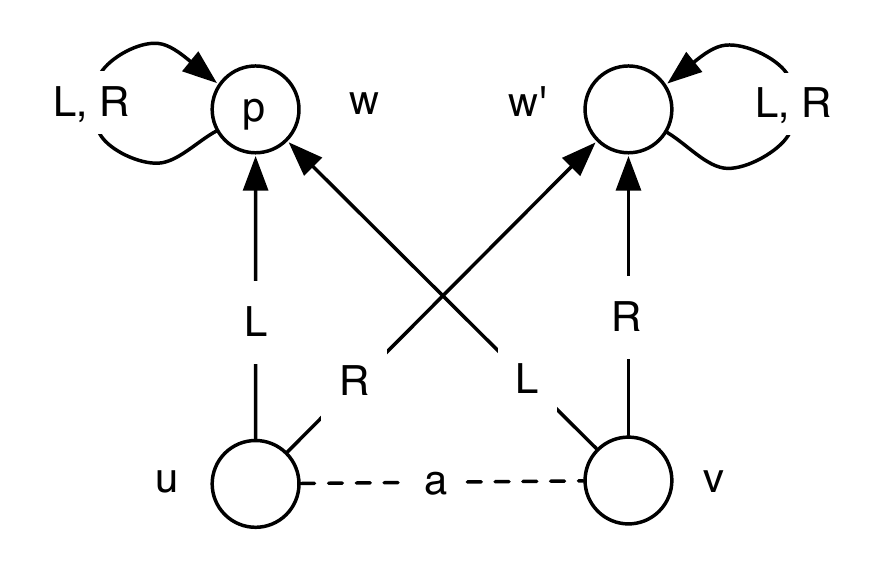}}
\vspace{0mm}
\caption{Epistemic transition system $T_2$.}\label{intro-2 figure}
\vspace{-2mm}
\end{center}
\vspace{-2mm}
\end{figure}

As our second example, let us consider the epistemic transition system $T_2$ obtained from $T_1$ by swapping labels on transitions from $v$ to $w$ and from $v$ to $w'$, see Figure~\ref{intro-2 figure}. Although in system $T_2$ agent $a$ still cannot distinguish states $u$ and $v$, she has a know-how strategy from either of these states to reach state $w$. We write this as $u\Vdash\E_a p$ and $v\Vdash\E_a p$. The strategy is to choose L. This strategy is know-how because it does not require to make different choices in the states that the agent cannot distinguish. 

\subsection{Imperfect Recall}
For the next example, we consider a transition system $T_3$ obtained from system $T_1$ by adding a new epistemic state $s$. From state $s$, agent $a$ can choose label L to reach state $u$ or choose label R to reach state $v$. Since proposition $q$ is satisfied in state $u$, agent $a$ has a know-how strategy to transition from state $s$ to a state (namely, state $u$) where $q$ is satisfied. Therefore, $s\Vdash\E_a q$. 

\begin{figure}[ht]
\begin{center}
\vspace{-2mm}
\scalebox{.6}{\includegraphics{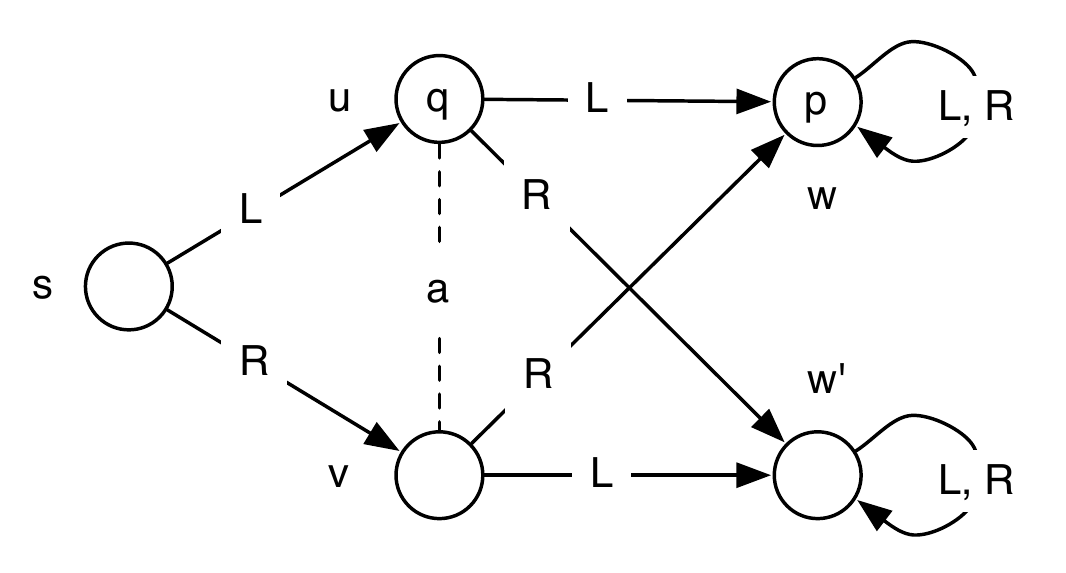}}
\vspace{0mm}
\caption{Epistemic transition system $T_3$.}\label{intro-3 figure}
\vspace{-2mm}
\end{center}
\vspace{-2mm}
\end{figure}

A more interesting question is whether $s\Vdash\E_a\E_a p$ is true. In other words, does agent $a$ know how to transition from state $s$ to a state in which she knows how to transition to another state in which $p$ is satisfied? One might think that such a strategy indeed exists: in state $s$ agent $a$ chooses  label L to transition to state $u$. Since there is no transition labeled by L that leads from state $s$ to state $v$, upon ending the first transition the agent would know that she is in state $u$, where she needs to choose label L to transition to state $w$. This argument, however, is based on the assumption that agent $a$ has a perfect recall. Namely, agent $a$ in state $u$ remembers the choice that she made in the previous state. We assume that the agents do not have a perfect recall and that an epistemic state description captures whatever memories the agent has in this state. In other words, in this paper we assume that the only knowledge that an agent possesses is the knowledge captured by the indistinguishability relation on the epistemic states. Given this assumption, upon reaching the state $u$ (indistinguishable from state $v$) agent $a$ knows that there {\em exists} a choice that she can make to transition to state in which $p$ is satisfied: $s\Vdash\E_a\S_a p$. However, she does not know which choice (L or R) it is: $s\nVdash\E_a\E_a p$. 

\subsection{Multiagent Setting}

\begin{figure}[ht]
\begin{center}
\vspace{-2mm}
\scalebox{.6}{\includegraphics{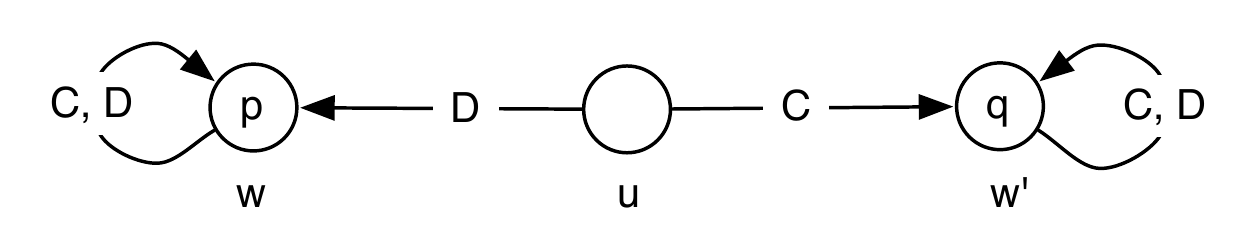}}
\vspace{0mm}
\caption{Epistemic transition system $T_4$.}\label{intro-4 figure}
\vspace{-2mm}
\end{center}
\vspace{-2mm}
\end{figure}

So far, we have assumed that only agent $a$ has an influence on which transition the system takes. In transition system $T_4$ depicted in Figure~\ref{intro-4 figure}, we introduce another agent $b$ and assume both agents $a$ and $b$ have influence on the transitions. In each state, the system takes the transition labeled D by default unless there is a consensus of agents $a$ and $b$ to take the transition labeled C. In such a setting, each agent has a strategy to transition system from state $u$ into state $w$ by voting D, but neither of them alone has a strategy to transition from state $u$ to state $w'$ because such a transition requires the consensus of both agents. Thus, $u\Vdash\S_a p\wedge \S_b p\wedge \neg\S_a q\wedge \neg\S_b q$. Additionally, both agents know how to transition the system from state $u$ into state $w$, they just need to vote D. Therefore, $u\Vdash\E_a p\wedge \E_b p$.

\begin{figure}[ht]
\begin{center}
\vspace{-2mm}
\scalebox{.6}{\includegraphics{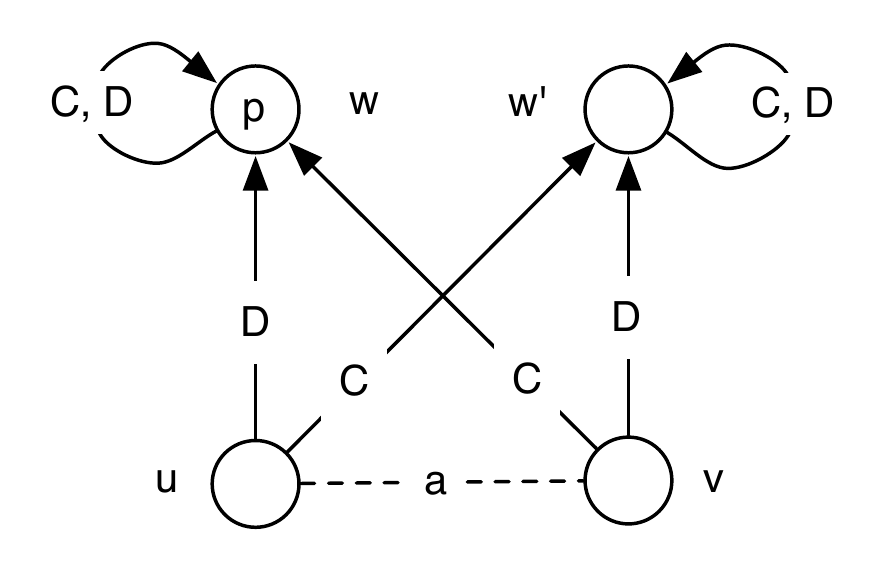}}
\vspace{0mm}
\caption{Epistemic transition system $T_5$.}\label{intro-5 figure}
\vspace{-2mm}
\end{center}
\vspace{-2mm}
\end{figure}

In Figure~\ref{intro-5 figure}, we show a more complicated transition system obtained from $T_1$ by renaming label L to D and renaming label R to C. Same as in transition system $T_4$, we assume that there are two agents $a$ and $b$ voting on the system transition. We also assume that agent $a$ cannot distinguish states $u$ and $v$ while agent $b$ can. By default, the system takes the transition labeled D unless there is a consensus to take transition labeled C. As a result, agent $a$ has a strategy (namely, vote D) in state $u$ to transition system to state $w$, but because agent $a$ cannot distinguish state $u$ from state $v$, not only does she not know how to do this, but she is not aware that such a strategy exists: $u\Vdash\S_a p\wedge\neg\E_a p \wedge\neg\K_a\S_a p$. Agent $b$, however, not only has a strategy to transition the system from state $u$ to state $w$, but also knows how to achieve this: $u\Vdash \E_b p$.

\subsection{Coalitions}

We have talked about strategies, know-hows, and knowledge of individual agents. In this paper we consider knowledge, strategies, and know-how strategies of coalitions. There are several forms of group knowledge that have been studied before. The two most popular of them are common knowledge and distributed knowledge~\cite{fhmv95}. Different contexts call for different forms of group knowledge.

As illustrated in the famous Two Generals' Problem~\cite{aeh75sigop,g78os} where communication channels between the agents are unreliable, establishing a common knowledge between agents might be essential for having a strategy. 

In some settings, the distinction between common and distributed knowledge is insignificant. For example, if members of a political fraction get together to share {\em all} their information and to develop a common strategy, then the distributed knowledge of the members becomes the common knowledge of the fraction during the in-person meeting.

Finally, in some other situations the distributed knowledge makes more sense than the common knowledge. For example, if a panel of experts is formed to develop a strategy, then this panel achieves the best result if it relies on the combined knowledge of its members rather than on their common knowledge.

In this paper we focus on distributed coalition knowledge and distributed-know-how strategies. We leave the common knowledge for the future research.

To illustrate how distributed knowledge of coalitions interacts with strategies and know-hows, consider epistemic transition system $T_6$ depicted in Figure~\ref{intro-6 figure}. In this system, agents $a$ and $b$ cannot distinguish states $u$ and $v$ while agents $b$ and $c$ cannot distinguish states $v$ and $u'$. In every state, each of agents $a$, $b$ and $c$ votes either L or R, and the system transitions according to the majority vote. In such a setting, any coalition of two agents can fully control the transitions of the system. 

\begin{figure}[ht]
\begin{center}
\vspace{-2mm}
\scalebox{.6}{\includegraphics{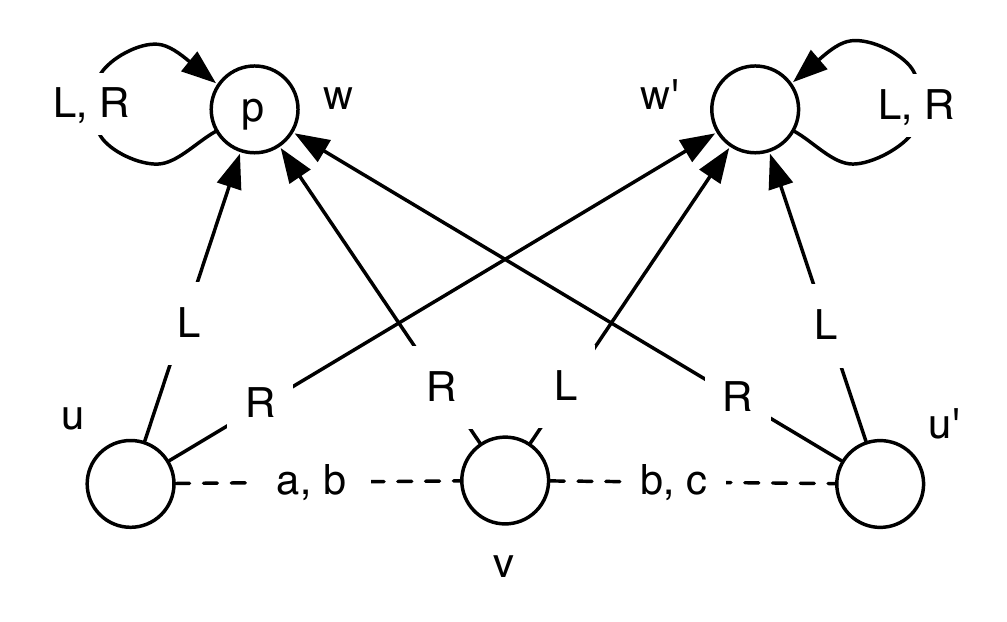}}
\vspace{0mm}
\caption{Epistemic transition system $T_6$.}\label{intro-6 figure}
\vspace{-2mm}
\end{center}
\vspace{-2mm}
\end{figure}

For example, by both voting L, agents $a$ and $b$ form a coalition $\{a,b\}$ that forces the system to transition from state $u$ to state $w$ no matter how agent $c$ votes. Since proposition $p$ is satisfied in state $w$, we write $u\Vdash\S_{\{a,b\}} p$, or simply $u\Vdash\S_{a,b} p$. Similarly, coalition $\{a,b\}$ can vote R to force the system to transition from state $v$ to state $w$. Therefore, coalition $\{a,b\}$ has strategies to achieve $p$ in states $u$ and $v$, but the strategies are different. Since they cannot distinguish states $u$ and $v$, agents $a$ and $b$ know that they have a strategy to achieve $p$, but they do \emph{not} know how to achieve $p$. In our notations, $v\Vdash S_{a,b}p\wedge \K_{a,b}S_{a,b}p \wedge \neg\E_{a,b} p$.   

On the other hand, although agents $b$ and $c$ cannot distinguish states $v$ and $u'$, by both voting R in either of states $v$ and $u'$, they form a coalition $\{b, c\}$ that forces the system to transition to state $w$ where $p$ is satisfied. Therefore, in any of states $v$ and $u'$, they not only have a strategy to achieve $p$, but also know that they have such a strategy, and more importantly, they know how to achieve $p$, that is, $v\Vdash\E_{b,c} p$.

\subsection{Nondeterministic Transitions}

In all the examples that we have discussed so far, given any state in a system, agents' votes uniquely determine the transition of the system. Our framework also allows nondeterministic transitions. Consider transition system $T_7$ depicted in Figure~\ref{intro-7 figure}. In this system, there are two agents $a$ and $b$ who can vote either C or D. If both agents vote C, then the system takes one of the consensus transitions labeled with C. Otherwise, the system takes the transition labeled with D. Note that there are two consensus transitions starting from state $u$. Therefore, even if both agents vote C, they do not have a strategy to achieve $p$, i.e., $u\nVdash\S_{a,b}p$. However, they can achieve $p\vee q$. Moreover, since all agents can distinguish all states, we have $u \Vdash\E_{a,b}(p\vee q)$.

\begin{figure}[ht]
\begin{center}
\vspace{-2mm}
\scalebox{.6}{\includegraphics{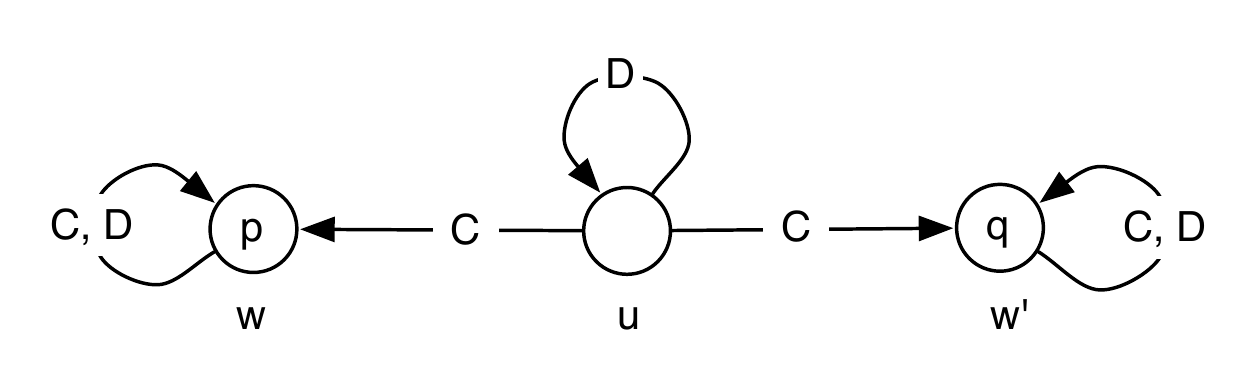}}
\vspace{0mm}
\caption{Epistemic transition system $T_7$.}\label{intro-7 figure}
\vspace{-2mm}
\end{center}
\vspace{-2mm}
\end{figure}

\subsection{Universal Principles}

In the examples above we focused on specific properties that were either satisfied or not satisfied in particular states of epistemic transition systems $T_1$ through $T_7$.
In this paper, we study properties that are satisfied in all states of all epistemic transition systems. Our main result is a sound and complete axiomatization of all such properties. We finish the introduction with an informal discussion of these properties.

\paragraph{Properties of Single Modalities}
Knowledge modality $K_C$ satisfies the axioms of epistemic logic S5 with distributed knowledge. Both strategic modality $S_C$ and know-how modality $\E_C$ satisfy cooperation properties~\cite{p01illc,p02}:  
\begin{eqnarray}
&\S_C(\phi\to\psi)\to(\S_D\phi\to\S_{C\cup D}\psi), \mbox{ where } C\cap D=\varnothing,\label{s coop}\\
&\E_C(\phi\to\psi)\to(\E_D\phi\to\E_{C\cup D}\psi), \mbox{ where } C\cap D=\varnothing.\label{e coop}
\end{eqnarray}
They also satisfy monotonicity properties
\begin{eqnarray*}
\S_C\phi\to\S_D\phi, \mbox{ where } C\subseteq D,\\
\E_C\phi\to\E_D\phi, \mbox{ where } C\subseteq D.
\end{eqnarray*}
The two monotonicity properties are not among the axioms of our logical system because, as we show in Lemma~\ref{subset lemma S} and Lemma~\ref{subset lemma E}, they are derivable.

\paragraph{Properties of Interplay}

Note that $w\Vdash\E_C\phi$ means that coalition $C$ has the same strategy to achieve $\phi$ in all epistemic states indistinguishable by the coalition from state $w$. Hence, the following principle is universally true:
\begin{equation}\label{st pos intro}
    \E_C\phi\to K_C\E_C\phi.
\end{equation}
Similarly, $w\Vdash\neg\E_C\phi$ means that coalition $C$ does not have the same strategy to achieve $\phi$ in all epistemic states indistinguishable by the coalition from state $w$. Thus,
\begin{equation}\label{st neg intro}
    \neg\E_C\phi\to K_C\neg\E_C\phi.
\end{equation}
We call properties~(\ref{st pos intro}) and (\ref{st neg intro}) {\em strategic positive introspection} and {\em strategic negative introspection}, respectively. 
The strategic negative introspection is one of our axioms. Just as how the positive introspection principle follows from the rest of the axioms in S5, the strategic positive introspection principle is also derivable (see Lemma~\ref{strategic positive introspection lemma}).

Whenever a coalition knows how to achieve something, there should exist a strategy for the coalition to achieve. In our notation,
\begin{equation}\label{st truth}
    \E_C\phi\to\S_C\phi.
\end{equation}
We call this formula {\em strategic truth} property and it is one of the axioms of our logical system.

The last two axioms of our logical system deal with empty coalitions. First of all, if formula $\K_\varnothing\phi$ is satisfied in an epistemic state of our transition system, then formula $\phi$ must be satisfied in every state of this system. Thus, even empty coalition has a trivial strategy to achieve $\phi$:
\begin{equation}\label{empty coal}
    \K_\varnothing\phi\to\E_\varnothing\phi. 
\end{equation}
We call this property {\em empty coalition} principle. In this paper we assume that an epistemic transition system never halts. That is, in every state of the system no matter what the outcome of the vote is, there is always a next state for this vote. This restriction on the transition systems yields property 
\begin{equation}\label{nonerm}
    \neg\S_C\bot. 
\end{equation}
that we call {\em nontermination} principle.

Let us now turn to the most interesting and perhaps most unexpected property of interplay. Note that $\S_\varnothing\phi$ means that an empty coalition has a strategy to achieve $\phi$. Since the empty coalition has no members, nobody has to vote in a particular way. Statement $\phi$ is guaranteed to happen anyway. Thus, statement $\S_\varnothing\phi$ simply means that statement $\phi$ is unavoidably satisfied after any single transition. 

\begin{figure}[ht]
\begin{center}
\vspace{-2mm}
\scalebox{.6}{\includegraphics{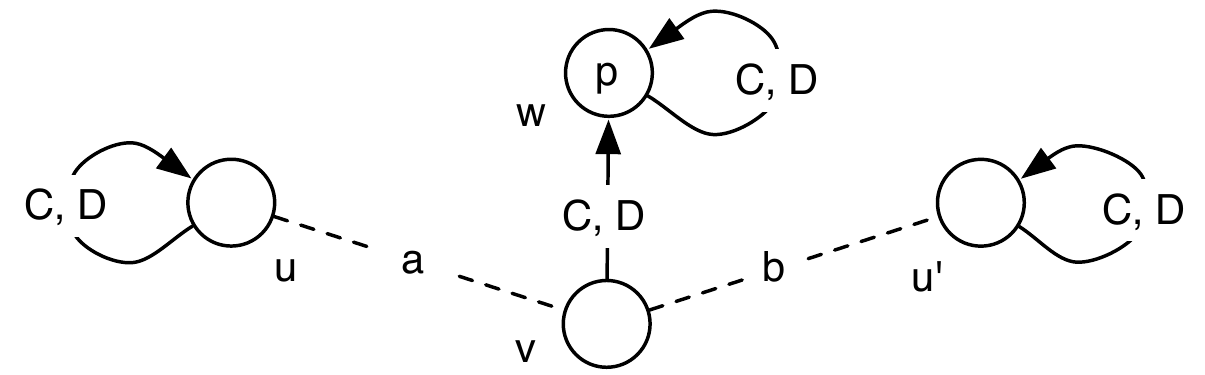}}
\vspace{0mm}
\caption{Epistemic transition system $T_8$.}\label{intro-8 figure}
\vspace{-2mm}
\end{center}
\vspace{-2mm}
\end{figure}
For example, consider an epistemic transition system depicted in Figure~\ref{intro-8 figure}. As in some of our earlier examples, this system has agents $a$ and $b$ who vote either C or D. If both agents vote C, then the system takes one of the consensus transitions labeled with C. Otherwise, the system takes the default transition labeled with D. Note that in state $v$ it is guaranteed that statement $p$ will happen after a single transition. Thus, $v\Vdash\S_\varnothing p$. At the same time, neither agent $a$ nor agent $b$ knows about this because they cannot distinguish state $v$ from states $u$ and $u'$ respectively. Thus, $v\Vdash\neg\K_a\S_\varnothing p \wedge \neg\K_b\S_\varnothing p$. 

In the same transition system $T_8$, agents $a$ and $b$ together can distinguish state $v$ from states $u$ and $u'$. Thus, $v\Vdash\K_{a,b}\S_\varnothing p$. 
In general,  statement $\K_C\S_\varnothing\phi$ means that not only $\phi$ is unavoidable, but coalition $C$ knows about it. Thus, coalition $C$ has a know-how strategy to achieve $\phi$:
$$
\K_C\S_\varnothing\phi\to \E_C\phi.
$$
In fact, the coalition would achieve the result no matter which strategy it uses. Coalition $C$ can even use a strategy that simultaneously achieves another result in addition to $\phi$: 
\begin{equation*}
    \K_C\S_\varnothing \phi\wedge \E_C\psi \to\E_C(\phi \wedge \psi).
\end{equation*}
In our logical system we use an equivalent form of the above principle that is stated using only implication:
\begin{equation}\label{ep determ}
    \E_C(\phi\to\psi)\to(\K_C\S_\varnothing \phi\to\E_C\psi).
\end{equation}
We call this property {\em epistemic determinicity} principle. Properties~(\ref{s coop}), (\ref{e coop}), (\ref{st neg intro}), (\ref{st truth}), (\ref{empty coal}), (\ref{nonerm}), and (\ref{ep determ}), together with axioms of epistemic logic S5 with distributed knowledge and propositional tautologies constitute the axioms of our sound and complete logical system. 

\subsection{Literature Review}

Logics of coalition power were developed by Marc Pauly~\cite{p01illc,p02}, who also proved the completeness of the basic logic of coalition power. 
Pauly's approach has been widely studied in the literature~\cite{g01tark,vw05ai,b07ijcai,sgvw06aamas,abvs10jal,avw09ai,b14sr}. An alternative logical system  was proposed by More and Naumov~\cite{mn12tocl}. 

Alur, Henzinger, and Kupferman introduced Alternating-Time Temporal Logic (ATL) that combines temporal and coalition modalities~\cite{ahk02}.
Van der Hoek and Wooldridge proposed to combine ATL with epistemic modality to form Alternating-Time Temporal Epistemic Logic~\cite{vw03sl}. They did not prove the completeness theorem for the proposed logical system. 

{\AA}gotnes and Alechina proposed a complete logical system that combines the coalition power and epistemic modalities~\cite{aa12aamas}. Since this system does not have epistemic requirements on strategies, it does not contain any axioms describing the interplay of these modalities. 

Know-how strategies were studied before under different names. While Jamroga and {\AA}gotnes talked about ``knowledge to identify and execute a strategy"~\cite{ja07jancl},  Jamroga and van der Hoek discussed ``difference between an agent knowing that he has a suitable strategy and knowing the strategy itself"~\cite{jv04fm}. Van Benthem called such strategies ``uniform"~\cite{v01ber}. Wang gave a complete axiomatization of ``knowing how" as a binary modality~\cite{w15lori,w17synthese}, but his logical system does not include the knowledge modality.  

In our AAMAS'17 paper, we investigated coalition strategies to enforce a condition indefinitely~\cite{nt17aamas}. Such strategies are similar to ``goal maintenance" strategies in Pauly's ``extended coalition logic"~\cite[p. 80]{p01illc}. We focused on ``executable" and ``verifiable" strategies. Using the language of the current paper, executability means that a coalition remains ``in the know-how" throughout the execution of the strategy. Verifiability means that the coalition can verify that the enforced condition remains true. In the notations of the current paper, the existence of a verifiable strategy could be expressed as $\S_C\K_C\phi$.  In~\cite{nt17aamas}, we provided a complete logical system that describes the interplay between the modality representing the existence of an ``executable" and ``verifiable" coalition strategy to enforce and the modality representing knowledge. This system can prove principles similar to the strategic positive introspection~(\ref{st pos intro}) and the strategic negative introspection~(\ref{st neg intro}) mentioned above.  

In the current paper, we combine know-how modality $\E$ with strategic modality $\S$ and epistemic modality $\K$. The proof of the completeness theorem is significantly more challenging than the one in \cite{nt17aamas}. It employs new techniques that construct pairs of maximal consistent sets in ``harmony" and in ``complete harmony", which are discussed in the full version of this paper~\cite{nt17arxiv-together}. 

\subsection{Paper Outline}

This paper is organized as follows. In Section~\ref{syntax and semantics} we introduce formal syntax and semantics of our logical system. In Section~\ref{axioms section} we list axioms and inference rules of the system. Section~\ref{examples section} provides examples of formal proofs in our logical systems. Section~\ref{conclusion section} concludes the paper.

The proofs of the soundness and the completeness can be found in the full version of this paper~\cite{nt17arxiv-together}. The key part of the proof of the completeness is the construction of a pair of sets in complete harmony.

\section{Syntax and Semantics}\label{syntax and semantics}

In this section we present the formal syntax and semantics of our logical system given a fixed finite set of agents $\mathcal{A}$. Epistemic transition system could be thought of as a Kripke model of modal logic S5 with distributed knowledge to which we add transitions controlled by a vote aggregation mechanism. Examples of vote aggregation mechanisms that we have considered in the introduction are the consensus/default mechanism and the majority vote mechanism. Unlike the introductory examples, in the general definition below we assume that at different states the mechanism might use different rules for vote aggregation. The only restriction on the mechanism that we introduce is that there should be at least one possible transition that the system can take no matter what the votes are. In other words, we assume that the system can never halt.  

For any set of votes $V$, by $V^\mathcal{A}$ we mean the set of all functions from set $\mathcal{A}$ to set $V$. Alternatively, the set $V^\mathcal{A}$ could be thought of as a set of tuples of elements of $V$ indexed by elements of $\mathcal{A}$.

\begin{definition}\label{transition system}
A tuple $(W,\{\sim_a\}_{a\in \mathcal{A}},V,M,\pi)$ is called an epistemic transition system, where
\begin{enumerate}
    \item $W$ is a set of epistemic states,
    \item $\sim_a$ is an indistinguishability equivalence relation on $W$ for each $a\in\mathcal{A}$,
    \item $V$ is a nonempty set called ``domain of choices", 
    \item $M\subseteq W\times V^\mathcal{A}\times W$ is an aggregation mechanism where for each $w\in W$ and each $\mathbf{s}\in V^\mathcal{A}$, there is $w'\in W$  such that $(w,\mathbf{s},w')\in M$,
    \item $\pi$ is a function that maps propositional variables into subsets of $W$.
\end{enumerate}
\end{definition}

\begin{definition}
A coalition is a subset of $\mathcal{A}$.
\end{definition}

Note that a coalition is always finite due to our assumption that the set of all agents $\mathcal{A}$ is finite. Informally, we say that two epistemic states are indistinguishable by a coalition $C$ if they are indistinguishable by every member of the coalition. Formally, coalition indistinguishability is defined as follows:

\begin{definition}\label{sim set}
For any epistemic states $w_1,w_2\in W$ and any coalition $C$, let $w_1\sim_C w_2$ if $w_1\sim_a w_2$ for each agent $a\in C$.
\end{definition}

\begin{corollary}\label{sim set corollary}
Relation $\sim_C$ is an equivalence relation on the set of states $W$ for each coalition $C$. 
\end{corollary}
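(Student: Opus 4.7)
The plan is to verify the three defining properties of an equivalence relation (reflexivity, symmetry, transitivity) for $\sim_C$ directly from Definition~\ref{sim set}, using the assumption in Definition~\ref{transition system}(2) that each individual relation $\sim_a$ is an equivalence relation on $W$.

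For reflexivity, I would fix any $w \in W$ and observe that $w \sim_a w$ for every $a \in C$ by reflexivity of $\sim_a$, so $w \sim_C w$ follows immediately. For symmetry, assume $w_1 \sim_C w_2$; then $w_1 \sim_a w_2$ for each $a \in C$, hence by symmetry of each $\sim_a$ we get $w_2 \sim_a w_1$, which unfolds to $w_2 \sim_C w_1$. For transitivity, if $w_1 \sim_C w_2$ and $w_2 \sim_C w_3$, then for each $a \in C$ we have both $w_1 \sim_a w_2$ and $w_2 \sim_a w_3$, and transitivity of $\sim_a$ gives $w_1 \sim_a w_3$; quantifying over $a \in C$ yields $w_1 \sim_C w_3$.

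The only subtle case is the empty coalition $C = \varnothing$, but here the quantification ``for each $a \in C$'' is vacuous, so $w_1 \sim_\varnothing w_2$ holds for every pair $(w_1, w_2) \in W \times W$, i.e.\ $\sim_\varnothing$ is the universal relation, which is trivially an equivalence relation. There is no real obstacle here: the corollary is a routine unpacking of Definition~\ref{sim set} combined with the equivalence-relation assumption on each $\sim_a$, and the argument is essentially the standard fact that an arbitrary intersection of equivalence relations is again an equivalence relation.
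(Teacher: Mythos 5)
Your proof is correct and matches the argument the paper implicitly relies on: the paper states this as an immediate corollary of Definition~\ref{sim set}, and the intended justification is exactly your routine check that an intersection of equivalence relations is an equivalence relation, with $\sim_\varnothing$ degenerating to the universal relation. Nothing further is needed.
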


By a strategy profile $\{s_a\}_{a\in C}$ of a coalition $C$ we mean a tuple that specifies vote $s_a\in V$ of each member $a\in C$. Since such a tuple can also be viewed as a function from set $C$ to set $V$, we denote the set of all strategy profiles of a coalition $C$ by $V^C$:

\begin{definition}\label{strategy}
Any tuple $\{s_a\}_{a\in C}\in V^C$ is called a strategy profile of coalition $C$.
\end{definition}

In addition to a fixed finite set of agents $\mathcal{A}$ we  also assume a fixed countable set of propositional variables.  The language $\Phi$ of our formal logical system is specified in the next definition.   

\begin{definition}\label{Phi}
Let $\Phi$ be the minimal set of formulae such that
\begin{enumerate}
    \item $p\in\Phi$ for each propositional variable $p$,
    \item $\neg\phi,\phi\to\psi\in\Phi$ for all formulae $\phi,\psi\in\Phi$,
    \item $\K_C\phi,\S_C\phi,\E_C\phi\in\Phi$ for each coalition $C$ and each $\phi\in\Phi$.
\end{enumerate}
\end{definition}

In other words, language $\Phi$ is defined by the following grammar:
$$
\phi := p\;|\;\neg\phi\;|\;\phi\to\phi\;|\;\K_C\phi\;|\;\S_C\phi\;|\;\E_C\phi.
$$

By $\bot$ we denote the negation of a tautology. For example, we can assume that $\bot$ is $\neg(p\to p)$ for some fixed propositional variable $p$. 

According to Definition~\ref{transition system}, a mechanism specifies the transition that a system might take for any strategy profile of the set of {\em all} agents $\mathcal{A}$. It is sometimes convenient to consider transitions that are {\em consistent} with a given strategy profile $\mathbf s$ of a give coalition $C\subseteq \mathcal{A}$. We write $w\to_{\mathbf s}u$ if a transition from state $w$ to state $u$ is consistent with strategy profile $\mathbf s$. The formal definition is below.

\begin{definition}\label{single arrow}
For any epistemic states $w,u\in W$, any coalition $C$, and any strategy profile ${\mathbf s}=\{s_a\}_{a\in C}\in V^C$, we write $w\to_{\mathbf s}u$ if $(w,\mathbf{s'},u)\in M$ for some strategy profile $\mathbf{s'}=\{s'_a\}_{a\in\mathcal{A}}\in V^\mathcal{A}$  such that $s'_a=s_a$ for each $a\in C$.
\end{definition}
\begin{corollary}\label{empty coalition corollary}
For any strategy profile $\mathbf{s}$ of the empty coalition $\varnothing$, if there are a coalition $C$ and a strategy profile $\mathbf{s'}$ of coalition $C$ such that $w\to_\mathbf{s'} u$, then $w\to_{\mathbf s}u$.
\end{corollary}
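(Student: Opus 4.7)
The plan is to simply unfold Definition~\ref{single arrow} on both sides of the implication. From the hypothesis $w \to_{\mathbf{s'}} u$, where $\mathbf{s'} = \{s'_a\}_{a \in C}$, I extract a full strategy profile $\mathbf{s''} = \{s''_a\}_{a \in \mathcal{A}} \in V^\mathcal{A}$ such that $(w, \mathbf{s''}, u) \in M$ and $s''_a = s'_a$ for every $a \in C$.

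To conclude $w \to_{\mathbf{s}} u$, where $\mathbf{s}$ is the (unique) strategy profile of the empty coalition, I must produce some $\mathbf{s'''} \in V^\mathcal{A}$ with $(w, \mathbf{s'''}, u) \in M$ and $\mathbf{s'''}$ agreeing with $\mathbf{s}$ on every agent in $\varnothing$. The agreement condition is vacuously satisfied regardless of the choice of $\mathbf{s'''}$, so taking $\mathbf{s''' } := \mathbf{s''}$ works, which gives the desired conclusion.

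The ``hard part'' is really just recognizing that the only content of $\mathbf{s}$ being an empty-coalition profile is that the agreement constraint in Definition~\ref{single arrow} trivializes. No restriction on $M$ beyond its existence (guaranteed by item~4 of Definition~\ref{transition system}) is needed, since the witness supplied by the hypothesis already lies in $M$. Hence the proof is a one-line quantifier reshuffle.
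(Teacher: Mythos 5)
Your proof is correct and is exactly the intended argument: the paper states this as an immediate corollary of Definition~\ref{single arrow}, relying on precisely the observation that the agreement condition for the empty coalition is vacuous, so the witness from the hypothesis serves directly.
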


 The next definition is the key definition of this paper. It formally specifies the meaning of the three modalities in our logical system.



\begin{definition}\label{sat}
For any epistemic state $w\in W$ of a transition system $(W,\{\sim_a\}_{a\in \mathcal{A}},V,M,\pi)$ and any formula $\phi\in \Phi$, let relation $w\Vdash\phi$ be defined as follows
\begin{enumerate}
    \item $w\Vdash p$ if $w\in \pi(p)$ where $p$ is a propositional variable,
    \item $w\Vdash\neg\phi$ if $w\nVdash\phi$,
    \item $w\Vdash\phi\to\psi$ if $w\nVdash\phi$ or $w\Vdash\psi$,
    \item $w\Vdash \K_C\phi$ if $w'\Vdash\phi$ for each $w'\in W$ such that $w\sim_C w'$,
    \item $w\Vdash\S_C\phi$ if there is a strategy profile $\mathbf{s}\in V^C$ such that  $w\to_{\mathbf{s}} w'$ implies $w'\Vdash\phi$ for every $w'\in W$,
    \item $w\Vdash\E_C\phi$ if there is a strategy profile $\mathbf s\in V^C$ such that $w\sim_C w'$ and $w'\to_{\mathbf s} w''$ imply $w''\Vdash\phi$ for all $w',w''\in W$.
\end{enumerate}
\end{definition}

\section{Axioms}\label{axioms section}

In additional to propositional tautologies in language $\Phi$, our logical system consists of the following axioms. 
\begin{enumerate}
    \item Truth: $\K_C\phi\to\phi$,
    \item Negative Introspection: $\neg\K_C\phi\to\K_C\neg\K_C\phi$,
    \item Distributivity: $\K_C(\phi\to\psi)\to(\K_C\phi\to\K_{C}\psi)$,
    \item Monotonicity: $\K_C\phi\to \K_D\phi$, if $C\subseteq D$,
    \item Cooperation: $\S_C(\phi\to\psi)\to(\S_D\phi\to\S_{C\cup D}\psi)$, where $C\cap D=\varnothing$.
    \item Strategic Negative Introspection: $\neg\E_C\phi\to\K_C\neg\E_C\phi$,
    \item Epistemic Cooperation: $\E_C(\phi\to\psi)\to(\E_D\phi\to\E_{C\cup D}\psi)$, where $C\cap D=\varnothing$,
    \item Strategic Truth: $\E_C\phi\to\S_C\phi$,
    \item Epistemic Determinicity:\\ $\E_C(\phi\to\psi)\to(\K_C\S_\varnothing \phi\to\E_C\psi)$,
    \item Empty Coalition: $\K_\varnothing\phi\to\E_\varnothing \phi$,
    \item Nontermination: $\neg\S_C\bot$.
\end{enumerate}
We have discussed the informal meaning of these axioms in the introduction. In the full version of this paper~\cite{nt17arxiv-together}, we formally prove the soundness of these axioms with respect to the semantics from Definition~\ref{sat}.

We write $\vdash \phi$ if formula $\phi$ is provable from the axioms of our logical system using 
Necessitation, Strategic Necessitation, and Modus Ponens inference rules:
$$
\dfrac{\phi}{\K_C\phi}
\hspace{10mm}
\dfrac{\phi}{\E_C\phi}
\hspace{10mm}
\dfrac{\phi,\hspace{5mm} \phi\to\psi}{\psi}.
$$
We write $X\vdash\phi$ if formula $\phi$ is provable from the theorems of our logical system and a set of additional axioms $X$ using only Modus Ponens inference rule.

\section{Derivation Examples}\label{examples section}

In this section we give examples of formal derivations in our logical system. In Lemma~\ref{strategic positive introspection lemma} we prove the strategic positive introspection principle~(\ref{st pos intro}) discussed in the introduction. 

\begin{lemma}\label{strategic positive introspection lemma}
$\vdash \E_C\phi\to\K_C\E_C\phi$.
\end{lemma}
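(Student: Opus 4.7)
The plan is to mirror the classical derivation of positive introspection $\K\phi\to\K\K\phi$ from negative introspection in S5, using the Strategic Negative Introspection axiom $\neg\E_C\phi\to\K_C\neg\E_C\phi$ in place of the usual negative introspection for $\K$. Since $\K_C$ itself satisfies the full S5 package of axioms listed in Section~\ref{axioms section}, all the necessary machinery --- Truth, Negative Introspection, Distributivity, and Necessitation --- is already at hand. The creative step is simply to let SNI make $\E_C\phi$ play the role that $\K\phi$ plays in the textbook S5 argument.

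Concretely, I would chain three implications. First, instantiate the Truth axiom with $\neg\E_C\phi$ to get $\K_C\neg\E_C\phi\to\neg\E_C\phi$ and contrapose to obtain
$$\E_C\phi\to\neg\K_C\neg\E_C\phi.$$
Second, instantiate the Negative Introspection axiom of $\K_C$ with $\neg\E_C\phi$ to obtain
$$\neg\K_C\neg\E_C\phi\to\K_C\neg\K_C\neg\E_C\phi.$$
Third, contrapose the SNI axiom to get $\neg\K_C\neg\E_C\phi\to\E_C\phi$, then apply Necessitation for $\K_C$ followed by Distributivity to arrive at
$$\K_C\neg\K_C\neg\E_C\phi\to\K_C\E_C\phi.$$
Composing these three implications by propositional reasoning yields the desired $\E_C\phi\to\K_C\E_C\phi$.

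There is no serious technical obstacle in this derivation; the only conceptual point is spotting the analogy with the textbook S5 positive-introspection argument, after which the manipulation is purely formulaic. In particular, the Cooperation, Epistemic Cooperation, Strategic Truth, Epistemic Determinicity, Empty Coalition, and Nontermination axioms play no role --- the lemma uses only the S5 axioms for $\K_C$ together with SNI.
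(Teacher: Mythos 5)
Your derivation is correct and is essentially identical to the paper's own proof: both chain $\E_C\phi\to\neg\K_C\neg\E_C\phi$ (contraposed Truth), $\neg\K_C\neg\E_C\phi\to\K_C\neg\K_C\neg\E_C\phi$ (Negative Introspection), and $\K_C\neg\K_C\neg\E_C\phi\to\K_C\E_C\phi$ (contraposed SNI plus Necessitation and Distributivity). The only difference is the order in which the three links are established.
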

\begin{proof}
Note that formula $\neg\E_C\phi\to\K_C\neg\E_C\phi$ is an instance of Strategic Negative Introspection axiom. Thus, $\vdash \neg\K_C\neg\E_C\phi\to \E_C\phi$ by the law of contrapositive in the propositional logic. Hence,
$\vdash \K_C(\neg\K_C\neg\E_C\phi\to \E_C\phi)$ by  Necessitation inference rule. Thus, by  Distributivity axiom and Modus Ponens inference rule, 
\begin{equation}\label{pos intro eq}
   \vdash  \K_C\neg\K_C\neg\E_C\phi\to \K_C\E_C\phi.
\end{equation}

At the same time, $\K_C\neg\E_C\phi\to\neg\E_C\phi$ is an instance of Truth axiom. Thus, $\vdash \E_C\phi\to\neg\K_C\neg\E_C\phi$ by contraposition. Hence, taking into account the following instance of Negative Introspection axiom $\neg\K_C\neg\E_C\phi\to\K_C\neg\K_C\neg\E_C\phi$,
one can conclude that $\vdash \E_C\phi\to\K_C\neg\K_C\neg\E_C\phi$. The latter, together with statement~(\ref{pos intro eq}), implies the statement of the lemma by the laws of propositional reasoning.
\end{proof}

In the next example, we show that the existence of a know-how strategy by a coalition implies that the coalition has a distributed knowledge of the existence of a strategy.

\begin{lemma}
$\vdash \E_C\phi\to\K_C\S_C\phi$.
\end{lemma}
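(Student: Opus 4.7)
The plan is to chain the previously established strategic positive introspection (Lemma~\ref{strategic positive introspection lemma}) with the Strategic Truth axiom, using the standard pattern of pushing an implication under $\K_C$ via Necessitation and Distributivity.

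Concretely, I would proceed in two parallel tracks and then combine them. First, I would start from the Strategic Truth axiom $\E_C\phi\to\S_C\phi$ and lift it under $\K_C$: applying the Necessitation inference rule gives $\vdash\K_C(\E_C\phi\to\S_C\phi)$, and then one instance of the Distributivity axiom $\K_C(\E_C\phi\to\S_C\phi)\to(\K_C\E_C\phi\to\K_C\S_C\phi)$ combined with Modus Ponens yields
\begin{equation*}
\vdash \K_C\E_C\phi\to\K_C\S_C\phi.
\end{equation*}
Second, Lemma~\ref{strategic positive introspection lemma} already supplies $\vdash \E_C\phi\to\K_C\E_C\phi$. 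Chaining these two implications by propositional reasoning delivers the desired $\vdash\E_C\phi\to\K_C\S_C\phi$.

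There is no real obstacle here; the only subtlety is recognizing that the Strategic Truth axiom must be moved inside the scope of $\K_C$ before it is useful, since what we ultimately need is knowledge of the existence of a strategy rather than the existence itself. The combination of Necessitation with Distributivity is the canonical device for doing this, and the previously proved positive introspection principle is exactly what supplies the $\K_C\E_C\phi$ antecedent needed to trigger the lifted implication.
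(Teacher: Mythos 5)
Your proof is correct and is essentially identical to the paper's own argument: both lift the Strategic Truth axiom under $\K_C$ via Necessitation and Distributivity, and then chain with $\E_C\phi\to\K_C\E_C\phi$ from Lemma~\ref{strategic positive introspection lemma}. No differences worth noting.
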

\begin{proof}
By Strategic Truth axiom, $\vdash \E_C\phi\to\S_C\phi$. Hence, $\vdash \K_C(\E_C\phi\to\S_C\phi)$ by Necessitation inference rule. Thus, $\vdash \K_C\E_C\phi\to\K_C\S_C\phi$ by Distributivity axiom and Modus Ponens inference rule. At the same time, $\vdash \E_C\phi\to\K_C\E_C\phi$ by Lemma~\ref{strategic positive introspection lemma}. Therefore, $\vdash \E_C\phi\to\K_C\S_C\phi$ by the laws of propositional reasoning.
\end{proof}

The next lemma shows that the existence of a know-how strategy by a sub-coalition implies the existence of a know-how strategy by the entire coalition.

\begin{lemma}\label{subset lemma E}
$\vdash\E_C\phi\to \E_D\phi$, where $C\subseteq D$.
\end{lemma}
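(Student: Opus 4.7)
The plan is to apply the Epistemic Cooperation axiom to the disjoint decomposition $D = C \cup (D\setminus C)$, using a trivial implication as the content that the auxiliary coalition $D\setminus C$ ``enforces.'' Concretely, I would first use Strategic Necessitation on the propositional tautology $\phi\to\phi$ to derive $\vdash \E_{D\setminus C}(\phi\to\phi)$. Since $C\cap(D\setminus C)=\varnothing$, the Epistemic Cooperation axiom (instantiated with the outer coalition being $D\setminus C$, the inner coalition being $C$, and both formulas being $\phi$) gives
\[
\vdash \E_{D\setminus C}(\phi\to\phi)\to\bigl(\E_C\phi\to \E_{(D\setminus C)\cup C}\phi\bigr).
\]
Two applications of Modus Ponens, together with the set-theoretic identity $(D\setminus C)\cup C=D$ (which holds because $C\subseteq D$), yield the desired $\vdash \E_C\phi\to \E_D\phi$.

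There is no real obstacle here: the only subtlety is remembering that the Epistemic Cooperation axiom requires the two coalitions to be disjoint, which is why we split $D$ as $C\cup(D\setminus C)$ rather than trying to apply the axiom to $C$ and $D$ directly. The analogous derivation for $\S$ (Lemma~\ref{subset lemma S}, alluded to earlier in the paper) would proceed in exactly the same way, using Necessitation together with the Cooperation axiom for $\S$ in place of Strategic Necessitation and Epistemic Cooperation.
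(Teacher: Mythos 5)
Your proof is correct and is essentially identical to the paper's: both apply Strategic Necessitation to the tautology $\phi\to\phi$ to obtain $\E_{D\setminus C}(\phi\to\phi)$, then instantiate Epistemic Cooperation with the disjoint pair $D\setminus C$ and $C$ and use $(D\setminus C)\cup C=D$. No gaps.
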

\begin{proof}
Note that $\phi\to\phi$ is a propositional tautology. Thus, $\vdash\phi\to\phi$. Hence, $\vdash\E_{D\setminus C}(\phi\to\phi)$ by Strategic Necessitation inference rule. At the same time, by Epistemic Cooperation axiom,
$
\vdash\E_{D\setminus C}(\phi\to\phi)\to(\E_C\phi\to\E_D\phi)
$
due to the assumption $C\subseteq D$.  Therefore, $\vdash\E_C\phi\to\E_D\phi$ by Modus Ponens inference rule.
\end{proof}

Although our logical system has three modalities, the system contains necessitation inference rules 
only for two of them. The lemma below shows that the necessitation rule for the third modality is admissible.
\begin{lemma}\label{s necessitation}
For each finite $C\subseteq \mathcal{A}$, inference rule $\dfrac{\phi}{\S_C\phi}$ is admissible in our logical system.
\end{lemma}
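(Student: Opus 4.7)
The plan is to derive $\S_C\phi$ from $\phi$ by routing through the know-how modality $\E_C$, since we already have a Strategic Necessitation rule for $\E_C$ and an axiom linking $\E_C$ to $\S_C$.

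Concretely, suppose $\vdash \phi$. First I would apply the Strategic Necessitation inference rule $\dfrac{\phi}{\E_C\phi}$ with the given coalition $C$ to conclude $\vdash \E_C\phi$. Next, I would invoke the Strategic Truth axiom $\E_C\phi \to \S_C\phi$, which is available as an instance of axiom~8 for the same coalition $C$. Finally, one application of Modus Ponens to $\E_C\phi$ and $\E_C\phi \to \S_C\phi$ yields $\vdash \S_C\phi$, which is exactly what admissibility of the rule requires.

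There is essentially no obstacle here: the lemma is a direct consequence of the fact that we already took Strategic Necessitation as a primitive rule for $\E_C$ and adopted Strategic Truth as an axiom, so the work of ``necessitating for $\S_C$'' is effectively delegated to $\E_C$ and then transferred across by Strategic Truth. The only thing worth noting in the write-up is that the derivation goes through uniformly for every finite coalition $C\subseteq\mathcal{A}$, including $C=\varnothing$, since both Strategic Necessitation and Strategic Truth are stated for arbitrary coalitions.
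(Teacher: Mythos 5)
Your proposal is correct and is essentially identical to the paper's own proof: derive $\vdash\E_C\phi$ by Strategic Necessitation and then obtain $\vdash\S_C\phi$ via the Strategic Truth axiom and Modus Ponens.
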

\begin{proof}
Assumption $\vdash\phi$ implies $\vdash\E_C\phi$ by Strategic Necessitation inference rule. Hence, $\vdash\S_C\phi$ by Strategic Truth axiom and Modus Ponens inference rule.
\end{proof}

The next result is a counterpart of Lemma~\ref{subset lemma E}. It states that the existence of a strategy by a sub-coalition implies the existence of a strategy by the entire coalition.

\begin{lemma}\label{subset lemma S}
$\vdash\S_C\phi\to \S_D\phi$, where $C\subseteq D$.
\end{lemma}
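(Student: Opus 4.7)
The plan is to mirror the proof of Lemma~\ref{subset lemma E} almost verbatim, replacing every occurrence of $\E$ with $\S$ and replacing Epistemic Cooperation by the ordinary Cooperation axiom. The admissible necessitation rule for $\S$ established in Lemma~\ref{s necessitation} supplies exactly the tool needed to make the analogy work.

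Concretely, I would first observe that $\phi\to\phi$ is a propositional tautology, so $\vdash \phi\to\phi$. Applying the admissible rule from Lemma~\ref{s necessitation} with coalition $D\setminus C$, I obtain $\vdash \S_{D\setminus C}(\phi\to\phi)$. Next I instantiate the Cooperation axiom $\S_{C'}(\phi\to\psi)\to(\S_{D'}\phi\to\S_{C'\cup D'}\psi)$ by taking $C':=D\setminus C$, $D':=C$, and $\psi:=\phi$. The side condition $C'\cap D' = (D\setminus C)\cap C = \varnothing$ holds, and since $C\subseteq D$ we have $C'\cup D' = (D\setminus C)\cup C = D$. This yields
\[
\vdash \S_{D\setminus C}(\phi\to\phi)\to(\S_C\phi\to\S_D\phi).
\]
A single application of Modus Ponens then gives $\vdash \S_C\phi\to\S_D\phi$, as required.

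I do not expect any serious obstacle here: the only thing one needs to be slightly careful about is the side condition on the Cooperation axiom and the identity $(D\setminus C)\cup C = D$, both of which are immediate from $C\subseteq D$. The substantive content has already been packaged into Lemma~\ref{s necessitation}, which is precisely what makes this three-line argument possible.
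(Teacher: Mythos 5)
Your proof is correct and follows exactly the same route as the paper's: derive $\vdash\S_{D\setminus C}(\phi\to\phi)$ via the admissible necessitation rule of Lemma~\ref{s necessitation}, then apply the Cooperation axiom with the disjoint pair $D\setminus C$ and $C$, and conclude by Modus Ponens. The only difference is that you spell out the instantiation and the side condition more explicitly, which the paper leaves implicit.
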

\begin{proof}
Note that $\phi\to\phi$ is a propositional tautology. Thus, $\vdash\phi\to\phi$. Hence,  $\vdash\S_{D\setminus C}(\phi\to\phi)$ by Lemma~\ref{s necessitation}.
At the same time, by Cooperation axiom,
$
\vdash\S_{D\setminus C}(\phi\to\phi)\to(\S_C\phi\to\S_D\phi)
$
due to the assumption $C\subseteq D$.  Therefore, $\vdash\S_C\phi\to\S_D\phi$ by Modus Ponens inference rule.
\end{proof}

\section{Conclusion}\label{conclusion section}

In this paper we proposed a sound and complete logic system that captures an interplay between the  distributed knowledge, coalition strategies, and how-to strategies. In the future work we hope to explore know-how strategies of non-homogeneous coalitions in which different members contribute differently to the goals of the coalition. For example, ``incognito" members of a coalition might contribute only by sharing information, while ``open" members also contribute by voting.

\bibliographystyle{eptcs}
\bibliography{main}

\begin{thebibliography}{10}
\providecommand{\bibitemdeclare}[2]{}
\providecommand{\surnamestart}{}
\providecommand{\surnameend}{}
\providecommand{\urlprefix}{Available at }
\providecommand{\url}[1]{\texttt{#1}}
\providecommand{\href}[2]{\texttt{#2}}
\providecommand{\urlalt}[2]{\href{#1}{#2}}
\providecommand{\doi}[1]{doi:\urlalt{http://dx.doi.org/#1}{#1}}
\providecommand{\bibinfo}[2]{#2}

\bibitemdeclare{inproceedings}{aa12aamas}
\bibitem{aa12aamas}
\bibinfo{author}{Thomas \surnamestart {\AA}gotnes\surnameend} \&
  \bibinfo{author}{Natasha \surnamestart Alechina\surnameend}
  (\bibinfo{year}{2012}): \emph{\bibinfo{title}{Epistemic coalition logic:
  completeness and complexity}}.
\newblock In: {\sl \bibinfo{booktitle}{Proceedings of the 11th International
  Conference on Autonomous Agents and Multiagent Systems-Volume 2}},
  \bibinfo{organization}{International Foundation for Autonomous Agents and
  Multiagent Systems}, pp. \bibinfo{pages}{1099--1106}.

\bibitemdeclare{article}{abvs10jal}
\bibitem{abvs10jal}
\bibinfo{author}{Thomas \surnamestart {\AA}gotnes\surnameend},
  \bibinfo{author}{Philippe \surnamestart Balbiani\surnameend},
  \bibinfo{author}{Hans \surnamestart van Ditmarsch\surnameend} \&
  \bibinfo{author}{Pablo \surnamestart Seban\surnameend}
  (\bibinfo{year}{2010}): \emph{\bibinfo{title}{Group announcement logic}}.
\newblock {\sl \bibinfo{journal}{Journal of Applied Logic}}
  \bibinfo{volume}{8}(\bibinfo{number}{1}), pp. \bibinfo{pages}{62 -- 81},
  \doi{10.1016/j.jal.2008.12.002}.

\bibitemdeclare{article}{avw09ai}
\bibitem{avw09ai}
\bibinfo{author}{Thomas \surnamestart {\AA}gotnes\surnameend},
  \bibinfo{author}{Wiebe \surnamestart van~der Hoek\surnameend} \&
  \bibinfo{author}{Michael \surnamestart Wooldridge\surnameend}
  (\bibinfo{year}{2009}): \emph{\bibinfo{title}{Reasoning about coalitional
  games}}.
\newblock {\sl \bibinfo{journal}{Artificial Intelligence}}
  \bibinfo{volume}{173}(\bibinfo{number}{1}), pp. \bibinfo{pages}{45 -- 79},
  \doi{10.1016/j.artint.2008.08.004}.

\bibitemdeclare{inproceedings}{aeh75sigop}
\bibitem{aeh75sigop}
\bibinfo{author}{Eralp~A \surnamestart Akkoyunlu\surnameend},
  \bibinfo{author}{Kattamuri \surnamestart Ekanadham\surnameend} \&
  \bibinfo{author}{RV~\surnamestart Huber\surnameend} (\bibinfo{year}{1975}):
  \emph{\bibinfo{title}{Some constraints and tradeoffs in the design of network
  communications}}.
\newblock In: {\sl \bibinfo{booktitle}{ACM SIGOPS Operating Systems Review}},
  \bibinfo{volume}{9}, \bibinfo{organization}{ACM}, pp.
  \bibinfo{pages}{67--74}, \doi{10.1145/800213.806523}.

\bibitemdeclare{article}{ahk02}
\bibitem{ahk02}
\bibinfo{author}{Rajeev \surnamestart Alur\surnameend},
  \bibinfo{author}{Thomas~A. \surnamestart Henzinger\surnameend} \&
  \bibinfo{author}{Orna \surnamestart Kupferman\surnameend}
  (\bibinfo{year}{2002}): \emph{\bibinfo{title}{Alternating-time temporal
  logic}}.
\newblock {\sl \bibinfo{journal}{Journal of the ACM}}
  \bibinfo{volume}{49}(\bibinfo{number}{5}), pp. \bibinfo{pages}{672--713},
  \doi{10.1145/585265.585270}.

\bibitemdeclare{inproceedings}{b14sr}
\bibitem{b14sr}
\bibinfo{author}{Francesco \surnamestart Belardinelli\surnameend}
  (\bibinfo{year}{2014}): \emph{\bibinfo{title}{Reasoning about Knowledge and
  Strategies: Epistemic Strategy Logic}}.
\newblock In: {\sl \bibinfo{booktitle}{Proceedings 2nd International Workshop
  on Strategic Reasoning, {SR} 2014, Grenoble, France, April 5-6, 2014}}, {\sl
  \bibinfo{series}{{EPTCS}}} \bibinfo{volume}{146}, pp.
  \bibinfo{pages}{27--33}, \doi{10.4204/EPTCS.146.4}.

\bibitemdeclare{inproceedings}{b07ijcai}
\bibitem{b07ijcai}
\bibinfo{author}{Stefano \surnamestart Borgo\surnameend}
  (\bibinfo{year}{2007}): \emph{\bibinfo{title}{Coalitions in Action Logic}}.
\newblock In: {\sl \bibinfo{booktitle}{20th International Joint Conference on
  Artificial Intelligence}}, pp. \bibinfo{pages}{1822--1827}.

\bibitemdeclare{book}{fhmv95}
\bibitem{fhmv95}
\bibinfo{author}{Ronald \surnamestart Fagin\surnameend},
  \bibinfo{author}{Joseph~Y. \surnamestart Halpern\surnameend},
  \bibinfo{author}{Yoram \surnamestart Moses\surnameend} \&
  \bibinfo{author}{Moshe~Y. \surnamestart Vardi\surnameend}
  (\bibinfo{year}{1995}): \emph{\bibinfo{title}{Reasoning about knowledge}}.
\newblock \bibinfo{publisher}{MIT Press}, \bibinfo{address}{Cambridge, MA}.

\bibitemdeclare{inproceedings}{g01tark}
\bibitem{g01tark}
\bibinfo{author}{Valentin \surnamestart Goranko\surnameend}
  (\bibinfo{year}{2001}): \emph{\bibinfo{title}{Coalition games and alternating
  temporal logics}}.
\newblock In: {\sl \bibinfo{booktitle}{Proceedings of the 8th conference on
  Theoretical aspects of rationality and knowledge}},
  \bibinfo{organization}{Morgan Kaufmann Publishers Inc.}, pp.
  \bibinfo{pages}{259--272}.

\bibitemdeclare{incollection}{g78os}
\bibitem{g78os}
\bibinfo{author}{James~N \surnamestart Gray\surnameend} (\bibinfo{year}{1978}):
  \emph{\bibinfo{title}{Notes on data base operating systems}}.
\newblock In: {\sl \bibinfo{booktitle}{Operating Systems}},
  \bibinfo{publisher}{Springer}, pp. \bibinfo{pages}{393--481},
  \doi{10.1007/3-540-08755-9_9}.

\bibitemdeclare{article}{vw03sl}
\bibitem{vw03sl}
\bibinfo{author}{Wiebe \surnamestart van~der Hoek\surnameend} \&
  \bibinfo{author}{Michael \surnamestart Wooldridge\surnameend}
  (\bibinfo{year}{2003}): \emph{\bibinfo{title}{Cooperation, knowledge, and
  time: Alternating-time temporal epistemic logic and its applications}}.
\newblock {\sl \bibinfo{journal}{Studia Logica}}
  \bibinfo{volume}{75}(\bibinfo{number}{1}), pp. \bibinfo{pages}{125--157},
  \doi{10.1023/A:1026171312755}.

\bibitemdeclare{article}{vw05ai}
\bibitem{vw05ai}
\bibinfo{author}{Wiebe \surnamestart van~der Hoek\surnameend} \&
  \bibinfo{author}{Michael \surnamestart Wooldridge\surnameend}
  (\bibinfo{year}{2005}): \emph{\bibinfo{title}{On the logic of cooperation and
  propositional control}}.
\newblock {\sl \bibinfo{journal}{Artificial Intelligence}}
  \bibinfo{volume}{164}(\bibinfo{number}{1}), pp. \bibinfo{pages}{81 -- 119}, 
\doi{10.1016/j.artint.2005.01.003}.

\bibitemdeclare{article}{ja07jancl}
\bibitem{ja07jancl}
\bibinfo{author}{Wojciech \surnamestart Jamroga\surnameend} \&
  \bibinfo{author}{Thomas \surnamestart {\AA}gotnes\surnameend}
  (\bibinfo{year}{2007}): \emph{\bibinfo{title}{Constructive knowledge: what
  agents can achieve under imperfect information}}.
\newblock {\sl \bibinfo{journal}{Journal of Applied Non-Classical Logics}}
  \bibinfo{volume}{17}(\bibinfo{number}{4}), pp. \bibinfo{pages}{423--475},
  \doi{10.3166/jancl.17.423-475}.

\bibitemdeclare{article}{jv04fm}
\bibitem{jv04fm}
\bibinfo{author}{Wojciech \surnamestart Jamroga\surnameend} \&
  \bibinfo{author}{Wiebe \surnamestart van~der Hoek\surnameend}
  (\bibinfo{year}{2004}): \emph{\bibinfo{title}{Agents that know how to play}}.
\newblock {\sl \bibinfo{journal}{Fundamenta Informaticae}}
  \bibinfo{volume}{63}(\bibinfo{number}{2-3}), pp. \bibinfo{pages}{185--219}.

\bibitemdeclare{article}{mn12tocl}
\bibitem{mn12tocl}
\bibinfo{author}{Sara~Miner \surnamestart More\surnameend} \&
  \bibinfo{author}{Pavel \surnamestart Naumov\surnameend}
  (\bibinfo{year}{2012}): \emph{\bibinfo{title}{Calculus of Cooperation and
  Game-based Reasoning About Protocol Privacy}}.
\newblock {\sl \bibinfo{journal}{ACM Trans. Comput. Logic}}
  \bibinfo{volume}{13}(\bibinfo{number}{3}), pp. \bibinfo{pages}{22:1--22:21},
  \doi{10.1145/2287718.2287722}.

\bibitemdeclare{inproceedings}{nt17aamas}
\bibitem{nt17aamas}
\bibinfo{author}{Pavel \surnamestart Naumov\surnameend} \& \bibinfo{author}{Jia
  \surnamestart Tao\surnameend} (\bibinfo{year}{2017}):
  \emph{\bibinfo{title}{Coalition Power in Epistemic Transition Systems}}.
\newblock In: {\sl \bibinfo{booktitle}{Proceedings of the 2017 International
  Conference on Autonomous Agents and Multiagent Systems (AAMAS)}}, pp.
  \bibinfo{pages}{723--731}.

\bibitemdeclare{article}{nt17arxiv-together}
\bibitem{nt17arxiv-together}
\bibinfo{author}{Pavel \surnamestart Naumov\surnameend} \& \bibinfo{author}{Jia
  \surnamestart Tao\surnameend} (\bibinfo{year}{2017}):
  \emph{\bibinfo{title}{Together We Know How to Achieve: An Epistemic Logic of
  Know-How}}.
\newblock {\sl \bibinfo{journal}{arXiv:1705.09349}}.

\bibitemdeclare{phdthesis}{p01illc}
\bibitem{p01illc}
\bibinfo{author}{Marc \surnamestart Pauly\surnameend} (\bibinfo{year}{2001}):
  \emph{\bibinfo{title}{Logic for Social Software}}.
\newblock Ph.D. thesis, \bibinfo{school}{Institute for Logic, Language, and
  Computation}.

\bibitemdeclare{article}{p02}
\bibitem{p02}
\bibinfo{author}{Marc \surnamestart Pauly\surnameend} (\bibinfo{year}{2002}):
  \emph{\bibinfo{title}{A Modal Logic for Coalitional Power in Games}}.
\newblock {\sl \bibinfo{journal}{Journal of Logic and Computation}}
  \bibinfo{volume}{12}(\bibinfo{number}{1}), pp. \bibinfo{pages}{149--166},
  \doi{10.1093/logcom/12.1.149}.

\bibitemdeclare{inproceedings}{sgvw06aamas}
\bibitem{sgvw06aamas}
\bibinfo{author}{Luigi \surnamestart Sauro\surnameend}, \bibinfo{author}{Jelle
  \surnamestart Gerbrandy\surnameend}, \bibinfo{author}{Wiebe \surnamestart
  van~der Hoek\surnameend} \& \bibinfo{author}{Michael \surnamestart
  Wooldridge\surnameend} (\bibinfo{year}{2006}):
  \emph{\bibinfo{title}{Reasoning About Action and Cooperation}}.
\newblock In: {\sl \bibinfo{booktitle}{Proceedings of the Fifth International
  Joint Conference on Autonomous Agents and Multiagent Systems}},
  \bibinfo{series}{AAMAS '06}, \bibinfo{publisher}{ACM}, \bibinfo{address}{New
  York, NY, USA}, pp. \bibinfo{pages}{185--192}, \doi{10.1145/1160633.1160663}.

\bibitemdeclare{article}{v01ber}
\bibitem{v01ber}
\bibinfo{author}{Johan \surnamestart Van~Benthem\surnameend}
  (\bibinfo{year}{2001}): \emph{\bibinfo{title}{Games in Dynamic-Epistemic
  Logic}}.
\newblock {\sl \bibinfo{journal}{Bulletin of Economic Research}}
  \bibinfo{volume}{53}(\bibinfo{number}{4}), pp. \bibinfo{pages}{219--248},
  \doi{10.1111/1467-8586.00133}.

\bibitemdeclare{article}{w17synthese}
\bibitem{w17synthese}
\bibinfo{author}{Yanjing \surnamestart Wang\surnameend}:
  \emph{\bibinfo{title}{A Logic of Goal-directed Knowing How}}.
\newblock {\sl \bibinfo{journal}{Synthese}}.
\newblock \bibinfo{note}{(to appear)}, \doi{10.1007/s11229-016-1272-0}.


\bibitemdeclare{incollection}{w15lori}
\bibitem{w15lori}
\bibinfo{author}{Yanjing \surnamestart Wang\surnameend} (\bibinfo{year}{2015}):
  \emph{\bibinfo{title}{A logic of knowing how}}.
\newblock In: {\sl \bibinfo{booktitle}{Logic, Rationality, and Interaction}},
  \bibinfo{publisher}{Springer}, pp. \bibinfo{pages}{392--405},
  \doi{10.1007/978-3-662-48561-3_32}.

\end{thebibliography}

\end{document}